\newcounter{ToDo}
\newcounter{gaocomm}
\newcounter{Note1}
\definecolor{blue-violet}{rgb}{0.00,0.75,0.90}
\definecolor{mygreen}{rgb}{0.0, 0.5, 0.0}
\definecolor{awesome}{rgb}{1.0, 0.13, 0.32}
\definecolor{bostonuniversityred}{rgb}{1.0, 0.0, 0.0}
\title{A Simple Yet Effective SVD-GCN for Directed Graphs}
\author{%
  Chunya Zou, Andi Han, Lequan Lin and Junbin Gao \\
  Discipline of Business Analytics\\
  The University of Sydney Business School\\
  The University of Sydney\\
  Camperdown, Sydney, NSW 2006, Australia \\
  \texttt{\{czou6634,llin0615\}@uni.sydney.edu.au, \{andi.han,junbin.gao\}@sydney.edu.au} \\
}
\begin{document}
\maketitle

\begin{abstract}In this paper, we will present a simple yet effective way for directed Graph (digraph)
Convolutional Neural Networks based on the classic Singular Value Decomposition (SVD),
named SVD-GCN for digraphs. Through empirical experiments on node classification
datasets, we have found that SVD-GCN has remarkable improvements in a number of graph
node learning tasks and outperforms GCN and many other state-of-the-art graph neural
networks.
\end{abstract}

\section{Introduction}

Recent decades have witnessed the sucess of applying deep learning techniques in many domains given that deep learning strategies could effectively extract the latent information from data and efficiently capture the hidden patterns of data, no matter it is textual data or image data \cite{bronstein2017geometric,wu2020comprehensive,zhou2020graph,zhang2020deep,atz2021geometric}. 
While in the recent few years, graph representation learning has emerged and become one
of the most prominent fields in deep learning and machine learning area \cite{battaglia2018relational}. Many real world data are presented or structured in the form of graphs reflecting their intrinsic relations, for example, in a citation network,
documents or papers are represented as nodes while the link between nodes
represents the citation relationship.
Usually the formation and composition of graphs could be irregular and
complicated. To better handle the complexity and capture the hidden
information of graph-structured data, Graph Neural Networks (GNNs) which are
motivated by classical convolutional architectures (CNN) become an
effective and useful way to incorporate relationship information in learning problems or tasks, thus transforming the graph data into low-dimension space while maintaining the
structural information as much as possible \cite{wu2020comprehensive}.

We have seen two major types of methods in approaching GNNs: (1) Spatial-based  approaches and (2) Spectral-based approaches. Spatial methods apply message  passing (or aggregation) from the neighbours of each graph node e.g. their top-k neighbors, while spectral methods use the eigenvectors and eigenvalues of Laplacian matrix (for undirected graphs) with eigen-decomposition, and perform convolutions  with the Graph Fourier Transformation and inverse Graph Fourier transform. Both
techniques were recently brought together \cite{KipfWelling2017}. Though spectral  models have a more solid theoretical foundation in practice, they are less popular in terms of computational capacity than the spatial-based models as the former spectral models rely on eigen-decomposition of the normalized Laplacian matrix from a graph 
while spatial models can directly perform convolutions in the domain of the graph and process a  batch of nodes. Further, information propagation could be done locally and weights can be shared across the graph in different locations easily. 

Much of the recent literature on GNN pays particular attention to explore directed graphs, which are one major type of graph-structured data \cite{KipfWelling2017,HamiltonYingLeskovec2017}. 

Spectral models
are mostly designed for undirected graphs where the graph Laplacian is symmetric positive semi-definite as it provides an orthonormal system for graph Fourier analysis
\cite{wu2020comprehensive}. However, in the cases of directed graphs, the Laplacian matrix is non-symmetric. Its eigen-decomposition
leads to complex pairs of eigenvalues and eigenvectors that cannot form an orthonormal system. Performing convolution on such system will result in incorrect feature aggregation \cite{TongLiangSunLiRosenblumLim2020}. Thus, spatial-based models are usually preferred when it comes to process directed graphs \cite{KipfWelling2017,HamiltonYingLeskovec2017,VelickovicCucurullCasanovaRomeroLioBengio2018}. 

For the spectral methods, in
\cite{TongLiangSunLiRosenblumLim2020}, motivated by the Inception Network
\cite{szegedy2016rethinking}, researchers design and develop scalable receptive fields and avoid those unbalanced receptive fields which are caused by the non-symmetric digraph (directed graph). While Simplifying Graph Convolutional Networks (SGC) removes the nonlinearity between the GCN layers and folds the final function into a final linear model, the SGC outperforms many state-of-the-art GNNs
\cite{WuSouzaZhangFiftyYuWeinberger2019}. Scalable Inception Graph Neural Networks (SIGN), inspired by inception module, is equipped with normalized directed adjacency matrix and its transpose to deal with gigantic directed graphs while still can achieve optimal results with faster speed \cite{RossiFrascaChamberlainEynardBronsteinMonti2020}.

We wish to retain the better performance from spectral GNN methods for directed graphs. To achieve this, different from the existing approaches, we take the advantages of dual orthogonal systems of the adjacency matrix of a directed graph and conduct framelet decomposition over these SVD ``frequencies''.

\subsection{The key idea}

Spectral graph neural network has been proved powerful for graph tasks. 
This is based on the node Laplacian $\Lb$ or the 0-th order Hodge Laplacian shown in  \cite{Lim2020}.
Suppose $\Xb$ is the node signals on graph, then the base operation in spectral GNN is $\Yb$ = $\Lb$ $\Xb$.
For undirected graph, performing SVD of a symmetric matrix is nearly equivalent to EVD
of the matrix, while the only difference is the sign. Consider EVD of the normalized Laplacian
matrix $\widehat{\mathbf L} = \mathbf I - \widehat{\mathbf A}$ where $\widehat{\mathbf A}
= \mathbf {(D+I)}^{-1/2} (\mathbf{A + I}) \mathbf {(D+I)}^{-1/2}$ as $\widehat{\mathbf L}
= \mathbf U \boldsymbol \Lambda \mathbf U^\top = \mathbf U (\mathbf I - \boldsymbol
\Sigma) \mathbf U^\top$, where $\widehat{\mathbf A} = \mathbf U \boldsymbol \Sigma \mathbf
U^\top$. One can verify the eigenvalues of $\widehat{\mathbf L}$ are in $[0,2]$ and
eigenvalues of $\widehat{\mathbf A}$ are in $[-1,1]$. The eigenvalues of Laplacian can be interpreted as the frequencies of graph signals defined on the graph nodes.  

The above standard graph spectral methods usually consider graph Laplacian, which is symmetric, positive semi-definite (for undirected graphs). With these properties, as pointed out in the above, the classic Fourier analysis has been extended with the eigenvectors of Laplacian naturally forming a set of orthonormal basis.

However, this is a different story when looking at directed graphs, as we no longer enjoy the benefit of symmetric
property of the Laplacian. We here provide an alternative strategy based on SVD of adjacency matrix (could be
self-looped)  which plays a key role in 
a typical spatial Graph neural network, relying on the basic operation: $\Yb = \Ab\Xb$, where $\Ab$ is the adjacency matrix which is normally asymmetric for digraphs.

In fact, the adjacency matrix can be regarded as the graph shift operator that replaces the graph signal at one node with the linear combination of its neighbours \cite{8038007,sandryhaila2013discrete}. Regardless of whether the graph is undirected or directed, as long as the adjacency matrix is diagonalizable, we can always factor $\mathbf A = \mathbf V \mathbf \Lambda \mathbf V^{-1}$ and generalize the Fourier transform as $\widehat{\mathbf x} = \mathbf V^{-1} \mathbf x$. The adjacency matrix is diagonalizable for strongly connected directed graph \cite{vanDamOmidi2018} 
and we may consider Jordan decomposition when such condition is violated \cite{sandryhaila2013discrete}. Notice that for general directed graphs, $\mathbf V^{-1} \neq \mathbf V^\top$ and also the $\mathbf S, \mathbf V$ are complex-valued, which poses difficulty to extend classic wavelet/framelet theories.

To avoid such issue, we consider the SVD of the shift operator, $\mathbf A = \mathbf U \mathbf \Lambda \mathbf V^\top$, which provides two sets of orthonormal bases $\mathbf {U, V}$ with real-valued, positive singular values $\mathbf \Lambda$. By applying the shift operator to a graph signal $\mathbf X$, it can be interpreted as first decomposing the signal in terms of the bases defined by the columns of $\mathbf V$, which is followed by a scaling operation defined by $\mathbf \Lambda$. Then the signals are transformed via another set of bases defined by the columns of $\mathbf U$.

As the magnitude of $\mathbf \Lambda$ indeed means the ``frequency'',
we can regulate $\mathbf \Lambda$ by e.g. a modulation function $g$ and define the
following filtering
\begin{align}
\Yb = \sigma((\Vb g(\mathbf \Lambda) \Ub^\top)\cdot g_{\theta} \circ (\Ub g(\mathbf \Lambda) \Vb^\top\Xb) ). \label{eq:1-1}
\end{align}
Extra care should be paid that the smaller singular values means
noised signal components. A well-designed modulation function $g$ shall learn to regulate the ``frequency'' components in the graph signals $\Xb$ in conjunction with a filter $g_{\theta}$, \cite{ChangRongXuHuangSojoudiHuangZhu2021}.



In this paper, we will present a simple yet effective way for directed Graph (digraph) Convolutional Neural Networks based on the classic Singular Value Decomposition (SVD), named SVD-GCN for digraphs. Through empirical experiments on node classification datasets, we have found that SVD-GCN has remarkable improvements in a number of graph node learning tasks and outperforms GCN and many other state-of-the-art graph neural networks.

Our \textit{contributions} in this paper are in four-fold:
\begin{enumerate}
    \item To our best knowledge, this is the first attempt to introduce adjacency SVD for graph  convolutions neural network. To better filter the graph signal in the SVD domain, we apply the graph quasi-framelet decomposition on graph signals so that enhance the performance of the SVD-GCN.
    \item We investigate the way of scaling up the SVD-GCN for large scale graphs based Chebyshev polynomial approximation by deriving fast filtering for singular values without explicitly conducting SVD for the large scale adjacency matrix.
    \item We theoretically prove that the new dual orthogonormal systems offered by SVD provide the guarantees of graph signal decomposition and reconstruction based on the classic and quasi-framelets.
    \item  The results of extensive experiments prove the effectiveness of node representation learning via the framelet SVD-GCNs and show the outperformance against many state-of-the-art methods for digraphs.
\end{enumerate}

\paragraph{Organizations.} The paper is organized as follows. Section~\ref{Sec:2} is dedicated to introducing the relevant works on graph neural networks for directed graphs and reviewing the graph framelets/quasi-framelets developed in the recent years, to pave the way how this can be used for directed graphs. Section~\ref{Sec:3} introduces the theory of graph signal SVD and its combination with the graph framelets/quasi-framelets which leads to our proposed SVD-GCN schemes for directed graphs. In Section~\ref{Sec:4}, comprehensive experiments are conducted to demonstrate the robustness and effectiveness of the proposed SVD-GCN and its performance against a wide range of graph neural networks model/algorithms in node classification tasks. Section~\ref{Sec:5} concludes the paper.


\section{Related Works}\label{Sec:2}
In this section, we will present a summary on the several works regarding GCN for
directed graphs and framelet-based convolutions.

\subsection{GCN for Directed Graphs}
Neural network was first applied to directed acyclic graphs in \cite{sperduti1997supervised}, and this motivated the early studies in GNNs. But these early research mostly focus on Recurrent GNNs (RecGNN) \cite{wu2020comprehensive}. Later, the Graph Neural Network (GNN) concept was proposed and further elaborated in \cite{scarselli2008graph} and \cite{gori2005new}, that expands the application domain of existing neural networks even larger and GNNs can be implemented to process more graph-based data. Inspired by the success of Convolutional Neural Network (CNN) application in computer vision, researchers developed many approaches that can redefine the notion of graph's convolution, and all these approaches and methods all are under umbrella of Convolutional Graph Neural Network (ConvGNN) \cite{wu2020comprehensive}.

We have noted that ConvGNNs can be categorized into the spatial-based and spectral-based approaches.  The majority of them are spatial models, 
the spatial-based approaches utilize neighbour traversal methods to extract and concatenate features, and this in general is implemented via taking adjacency as transformation. 
Researchers did improve models' abilities of features extraction by stacking many graph convolutional layers \cite{wu2020comprehensive}. However, this approach could cause the overfitting problem and feature dilution issue as the models are built deeper with more and more GCN layers \cite{TongLiangSunRosenblumLim2020}.

Recently, more attention has focused on the provision of learning from directed graphs. A novel approach, Directed Graph Convolutional Network (DGCN) propagation model was developed and presented to adapt to digraph \cite{ma2019spectral}. The key idea is to re-define a symmetric so-called normalized Laplacian matrix for digraphs via normalizing and symmetrising the transition probability matrix. 
DGCN does have better performance than the state-of-the-art spectral and spatial GCN approaches on directed graph datasets in semi-supervised nodes classification tasks, but it still has some limitations, such as it increases computational cost, requires very high memory space and it is developed based on the assumption that the input digraph of DGCN is strongly connected \cite{ma2019spectral}.

In \cite{monti2018motifnet}, the researchers proposed a GCN called MotifNet, which is able to deal with directed graphs by exploiting local graph motifs. It basically uses the motif-induced adjacencies, constructed convolution-like graph filters and applied attention mechanism, while the experimental results on the real data shows that MotifNet does have advantages dealing with directed graphs and addressing the drawback of spectral GCNs application in processing graph data, without further increasing the computational cost.

In \cite{TongLiangSunLiRosenblumLim2020}, inspired by the Inception Network module presented in \cite{szegedy2016rethinking}, researchers proposed the Digraph Inception Convolutional Networks (DiGCN) in which they designs and develop scalable receptive fields and avoid those unbalanced receptive fields which are caused by the non-symmetric digraph (directed graph). Through experiments, DiGCN is proved that it's able to learn digraph representation effectively and outperforms mainstream digraph benchmarks’ GCN convolution.

One of common ideas used in all the approaches is to use heuristic to construct and revise the Laplacian matrix. For example, the recent work \cite{zhang2021magnet} proposes a way to define the so-called magnetic Laplacian, as a complex Hermitian matrices, in which the direction information is encoded by complex numbers. 
The experimental results manifest that MagNet's performance exceeds all other approaches on the majority of the tasks such as digraph node classification and link prediction \cite{zhang2021magnet}.




\subsection{Framelets and Quasi-Framelets}
Framelet-based convolutions have been applied to graph neural networks and reveal superior performance in keeping node feature related information and graph geometric information, while framelet-based convolutions also have the advantage of fast algorithm in signal decomposition and reconstruction process \cite{YangZhouYinGao2022}.

Framelet-based convolution for signals defined on manifolds \cite{dong2017sparse} has been recently applied for graph signals \cite{ZhengZhouGaoWangLioLiMontufar2021}. Instead of using a single modulation function $g(\cdot)$ as in \eqref{eq:1-1}, a group of modulation functions in spectral domain were used, i.e., the scaling functions in Framelet terms. This set of functions can jointly regulate the spectral frequency and are normally designed according to the Multiresolution Analysis (MRA) based on a set of (finite) filter bank $\eta = \{a; b^{(1)}, ..., b^{(K)}\} \subset l_0 (\mathbb{Z})$ in spatial domain. Yang et al. \cite{YangZhouYinGao2022} further demonstrate that the MRA is unnecessary, and propose a suffcient idenity condition of a group modulation functions, which is reflected in the following definition

\begin{definition}[Modulation functions for Quasi-Framelets] We call a set of $K+1$ positive modulation functions defined on $[0, \pi]$, $\mathcal{F} = \{g_0(\xi), g_1(\xi), ..., g_K(\xi)\}$, a quasi-framelet if it satisfies the following identity condition
\begin{align}
g_0(\xi)^2 + g_1(\xi)^2 + \cdots + g_K(\xi)^2 \equiv 1,\;\;\; \forall \xi \in [0, \pi] \label{eq:3}
\end{align}
such that $g_0$ decreases from 1 to 0 and $g_K$ increases from 0 to 1 over the spectral domain $[0, \pi]$.
\end{definition}
Particularly $g_0$ aims to regulate the high frequency while $g_K$ to regulate the lower frequency, and the rest to regulate other frequency between. The classic examples include the linear framelet and quadratic framelet functions \cite{dong2017sparse,ZhengZhouGaoWangLioLiMontufar2021}, and sigmoid and entropy quasi-framelet functions \cite{YangZhouYinGao2022}. For the convenience, we list two examples here:

\textit{Linear Framelet Functions} \cite{dong2017sparse}:
\begin{align*}
    g_0(\xi) = \cos^2(\xi/2);\;\;\;\; g_1(\xi)\frac1{\sqrt{2}}\sin(\xi); \;\;\;g_2(\xi) =\sin^2(\xi/2).
\end{align*}

\textit{Entropy Framelet Functions} \cite{YangZhouYinGao2022}:
 \begin{align*}
    g_0(\xi) &= \begin{cases} \sqrt{1 - g^2_1(\xi)}, & \xi <= \pi/2\\
    0, & \text{otherwise}
    \end{cases}\\
        g_1(\xi) &= \sqrt{4\alpha \xi/\pi - 4 \alpha \left(\xi/\pi\right)^2}\\
        g_2(\xi) &= \begin{cases} \sqrt{1 - g^2_1(\xi)}, & \xi > \pi/2\\
    0, & \text{otherwise}
    \end{cases}
    \end{align*}
where $0<\alpha\leq 1$ is a hyperparameter that can be fine-tuned. Note, for $\alpha=1$, $g^2_1(\pi\xi)$ is the so-called \textit{binary entropy function}.

\section{SVD-Framelet Decomposition}\label{Sec:3}
\subsection{The Definition of SVD-Framelets}
The recent work on the undecimated framelets-enhanced graph neural networks (UFG) has enjoyed its great success in many graph learning tasks \cite{ZhengZhouGaoWangLioLiMontufar2021}. As UFG is built upon the spectral graph signal analysis framework by exploiting the power of multiresolution analysis provided by the classic framelet theory \cite{dong2017sparse}, it can only be applied to undirected graphs. On the other side, the classic framelet construction is very restrictive. To explore more meaningful signal frequency decomposition, the authors of \cite{YangZhouYinGao2022}  propose a more flexible way to construct framelets, which are called quasi-framelets.

How to explore such multiresolution lens for digraph signals is a valuable question to ask. This motivates us to look back whether any classic signal analysis methods can be adopted for digraph signals.

Now consider an directed (homogeneous) graph  $\mathcal{G} =(\mathcal{V}, \mathcal{E})$ with $N$ nodes and any graph signal $\mathbf X$ defined on its nodes. Suppose that $\mathbf A\in\mathbb{R}^{N\times N}$ denotes its adjacency matrix which is typically asymmetric and its in-degree and out-degree diagonal matrices $\Db_1$ and $\Db_2$. Now we will consider its self-looped normalized adjacency matrix $\widehat{\Ab} = (\Db_1+\Ib)^{-1/2}(\Ab+\Ib) (\Db_2+\Ib)^{-1/2}$. Typically in the spatial graph neural networks, $\widehat{\Ab}$ is used to define the following convolutional layer
\begin{align}
\Xb' = \widehat{\Ab}\Xb \Wb. \label{Eq:1}
\end{align}
Now suppose we have the following SVD for the normalized (directed) adjacency matrix
\begin{align}
\widehat{\Ab} = \Ub \mathbf \Lambda\Vb^\top, \label{Eq:2}
\end{align}
where $\Ub$ contains the left singular vectors, $\Vb$ contains the right singnular vectors, and $\mathbf \Lambda = \text{diag}(\lambda_1, ..., \lambda_N)$ is the diagonal matrix of all the singular values in decreasing order. Taking \eqref{Eq:2} into \eqref{Eq:1} reveals that we are projecting graph node signals $\Xb$ onto the orthogonal system defined by the columns of $\Vb$, then reconstructing the signals on its \textit{dual} orthogonal system defined by the columns of $\Ub$, with appropriate scaling given by the singular values $\mathbf \Lambda$. This is the place we can ``filter'' graph signals according to the dual orthogonal systems. 

Inspired by the idea of applying undecimated framelets over the Laplacian orthogonal systems (i.e. spectral analysis), we will introduce applying framelets over the dual orthogonal systems defined by SVD.

For a given set of framelet or quasi-framelet functions $\mathcal{F} = \{g_0(\xi), g_1(\xi), ..., g_K(\xi)\}$ defined on $[0, \pi]$\footnote{The reason why we consider this domain as the most classic framelets are defined on $[0, \pi]$. This restriction can be removed for quasi-framelets.}, see \cite{YangZhouYinGao2022,ZhengZhouGaoWangLioLiMontufar2021}. and a given multiresolution level $L$ ($\geq 0$), define the following framelet or quasi-framelet signal decomposition and reconstruction operators
\begin{align}
    \mathcal{W}_{0,L} =& \mathbf V g_0(\frac{\boldsymbol{\Lambda}}{2^{m+L}}) \cdots g_0(\frac{\boldsymbol{\Lambda}}{2^{m}}) \Lambda^{\frac12}\mathbf V^\top, \label{Eq:3}\\
    \mathcal{W}_{k,0} =& \mathbf V g_k(\frac{\boldsymbol{\Lambda}}{2^{m}}) \Lambda^{\frac12}\mathbf V^\top, \text{for } k = 1, ..., K, \label{Eq:4}\\
    \mathcal{W}_{k,\ell} =& \mathbf V g_k(\frac{\boldsymbol{\Lambda}}{2^{m+\ell}})g_0(\frac{\boldsymbol{\Lambda}}{2^{m+\ell-1}}) \cdots g_0(\frac{\boldsymbol{\Lambda}}{2^{m}}) \Lambda^{\frac12}\mathbf V^\top, \label{Eq:5}\\
    &\text{for } k=1, ..., K, \ell = 1, ..., L.\notag
\end{align}
and
\begin{align}
    \mathcal{V}_{0,L} =& \mathbf U \Lambda^{\frac12}g_0(\frac{\boldsymbol{\Lambda}}{2^{m}})   \cdots g_0(\frac{\boldsymbol{\Lambda}}{2^{m+L}}) \mathbf V^\top, \label{Eq:3a}\\
    \mathcal{V}_{k,0} =& \mathbf U \Lambda^{\frac12} g_k(\frac{\boldsymbol{\Lambda}}{2^{m}}) \mathbf V^\top, \text{for } k = 1, ..., K, \label{Eq:4a}\\
    \mathcal{V}_{k,l} =& \mathbf U \Lambda^{\frac12} g_0(\frac{\boldsymbol{\Lambda}}{2^{m}}) \cdots g_0(\frac{\boldsymbol{\Lambda}}{2^{m+\ell-1}}) g_k(\frac{\boldsymbol{\Lambda}}{2^{m+\ell}})\mathbf V^\top, \label{Eq:5a}\\
    & \text{for } k=1, ..., K, \ell = 1, ..., L. \notag
\end{align}
We stack them as $\mathcal{W} = [\mathcal{W}_{0,L}; \mathcal{W}_{1,0}; ...; \mathcal{W}_{K,0}; \mathcal{W}_{1,1}; ...; \mathcal{W}_{K,L}]$ in column direction and $\mathcal{V} = [\mathcal{V}_{0,L}, \mathcal{V}_{1,0}, ..., \mathcal{V}_{K,0}, \mathcal{V}_{1,1}, ..., \mathcal{V}_{K,L}]$ in row direction, then we have
\begin{theorem}\label{Them1} The SVD-GCN layer defined by \eqref{Eq:1} can be implemented by a process of decomposition and reconstruction defined by two operators $\mathcal{W}$ and $\mathcal{V}$, i.e.,
\[
\Xb' = \widehat{\Ab}\Xb\Wb = \mathcal{V}(\mathcal{W}\Xb\Wb).
\]
\end{theorem}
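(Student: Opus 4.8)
The plan is to reduce the claimed identity to showing that the composite operator $\mathcal{V}\mathcal{W}$ equals $\widehat{\Ab}$ itself, after which the statement follows by associativity of matrix multiplication since the learnable weight $\Wb$ sits on the right and factors out, i.e. $\mathcal{V}(\mathcal{W}\Xb\Wb) = (\mathcal{V}\mathcal{W})\Xb\Wb$. Because $\mathcal{W}$ is stacked vertically and $\mathcal{V}$ horizontally over the same index set $\{(0,L),(1,0),\dots,(K,0),(1,1),\dots,(K,L)\}$, the product $\mathcal{V}\mathcal{W}$ is exactly the block sum $\mathcal{V}_{0,L}\mathcal{W}_{0,L} + \sum_{k=1}^{K}\mathcal{V}_{k,0}\mathcal{W}_{k,0} + \sum_{\ell=1}^{L}\sum_{k=1}^{K}\mathcal{V}_{k,\ell}\mathcal{W}_{k,\ell}$, so the whole theorem collapses to evaluating this sum.

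First I would compute a single block product. Writing each block as $\mathcal{V}_{k,\ell} = \Ub\,\boldsymbol{\Lambda}^{1/2} D_{k,\ell}\,\Vb^\top$ and $\mathcal{W}_{k,\ell} = \Vb\,D_{k,\ell}\,\boldsymbol{\Lambda}^{1/2}\Vb^\top$, where $D_{k,\ell}$ denotes the matching diagonal product of modulation matrices $g_0(\cdot),g_k(\cdot)$ evaluated at the scaled singular values, the key observation is that the interior factor $\Vb^\top\Vb = \Ib$ by orthonormality of the right singular vectors. Hence $\mathcal{V}_{k,\ell}\mathcal{W}_{k,\ell} = \Ub\,\boldsymbol{\Lambda}^{1/2} D_{k,\ell}^2\,\boldsymbol{\Lambda}^{1/2}\Vb^\top$, using that all diagonal matrices which are functions of $\boldsymbol{\Lambda}$ commute. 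Summing over all blocks and factoring out the common $\Ub\,\boldsymbol{\Lambda}^{1/2}(\cdot)\,\boldsymbol{\Lambda}^{1/2}\Vb^\top$, the theorem reduces to proving that the bracketed diagonal sum $\sum_{\text{blocks}} D_{k,\ell}^2$ equals the identity $\Ib$.

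The crux — and the step I expect to be the main obstacle — is this diagonal identity, which I would establish pointwise at each singular value $\xi = \lambda_i$ by a telescoping argument driven by the quasi-framelet condition \eqref{eq:3}. Setting $P_{\ell} = \prod_{j=0}^{\ell} g_0(\xi/2^{m+j})^2$ with the convention $P_{-1}=1$, each block $(k,\ell)$ contributes $P_{\ell-1}\,g_k(\xi/2^{m+\ell})^2$ while the scaling block $(0,L)$ contributes $P_L$. Using $\sum_{k=1}^{K} g_k(\eta)^2 = 1 - g_0(\eta)^2$ from \eqref{eq:3} together with the relation $P_{\ell-1}\,g_0(\xi/2^{m+\ell})^2 = P_{\ell}$, the inner sum over $k$ collapses each level to $P_{\ell-1}-P_{\ell}$, so that the full sum telescopes to $P_L + \sum_{\ell=0}^{L}(P_{\ell-1}-P_\ell) = P_L + (1-P_L) = 1$. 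The one genuine subtlety to address is the domain: since \eqref{eq:3} is posited on $[0,\pi]$, I must argue that the dilation level $m$ is chosen so that every scaled argument $\xi/2^{m+\ell}$ lands in $[0,\pi]$ (for quasi-framelets the footnote already notes this restriction can be relaxed). With the diagonal sum equal to $\Ib$, the block sum becomes $\Ub\,\boldsymbol{\Lambda}^{1/2}\,\Ib\,\boldsymbol{\Lambda}^{1/2}\Vb^\top = \Ub\,\boldsymbol{\Lambda}\,\Vb^\top = \widehat{\Ab}$, and therefore $\mathcal{V}(\mathcal{W}\Xb\Wb) = \widehat{\Ab}\Xb\Wb = \Xb'$, which completes the proof.
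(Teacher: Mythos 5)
Your proposal is correct and takes essentially the same route as the paper: both reduce the claim to $\mathcal{V}\mathcal{W}=\widehat{\Ab}$, use $\Vb^\top\Vb=\Ib$ and commutativity of diagonal functions of $\boldsymbol{\Lambda}$ to collapse each block product, and then invoke the identity condition \eqref{eq:3} in a telescoping fashion over the levels $\ell=L,\dots,0$. The only difference is presentational --- the paper performs the telescoping recursively at the matrix level (absorbing level $L$ into $\mathcal{V}_{0,L-1}\mathcal{W}_{0,L-1}$ and iterating), while you carry it out pointwise on the diagonal with the partial products $P_\ell$, which is an equivalent and arguably cleaner bookkeeping of the same cancellation.
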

\begin{proof}
Indeed we only need to prove that $\widehat{\Ab} = \mathcal{V}\mathcal{W}$. We will apply the identity property of the (quasi-)framelet functions, i.e., $\sum^K_{k=0} g^2_k(\xi) \equiv 1$. According to the definition of the matrices $\mathcal{W}$ and $\mathcal{V}$, we have
\begin{align*}
&\mathcal{V}\mathcal{W}= \mathcal{V}_{0,L}\mathcal{W}_{0,L} + \sum^L_{\ell=0}\sum^K_{k=1}\mathcal{V}_{k,\ell}\mathcal{W}_{k,\ell} \\
=&\mathcal{V}_{0,L}\mathcal{W}_{0,L} + \sum^K_{k=1}\mathcal{V}_{k,L}\mathcal{W}_{k,L} + \sum^{L-1}_{\ell=0}\sum^K_{k=1}\mathcal{V}_{k,\ell}\mathcal{W}_{k,\ell}\\
=& \mathbf U \Lambda^{\frac12}g_0(\frac{\boldsymbol{\Lambda}}{2^{m}})   \cdots g_0(\frac{\boldsymbol{\Lambda}}{2^{m+L}}) \mathbf V^\top \mathbf V g_0(\frac{\boldsymbol{\Lambda}}{2^{m+L}}) \cdots g_0(\frac{\boldsymbol{\Lambda}}{2^{m}}) \Lambda^{\frac12}\mathbf V^\top \\
&+\sum^K_{k=1}\mathbf U \Lambda^{\frac12} g_0(\frac{\boldsymbol{\Lambda}}{2^{m}}) \cdots g_0(\frac{\boldsymbol{\Lambda}}{2^{m+L-1}}) g_k(\frac{\boldsymbol{\Lambda}}{2^{m+L}})\mathbf V^\top \cdot\\
&\phantom{+\sum^K_{k=1}}\mathbf V g_k(\frac{\boldsymbol{\Lambda}}{2^{m+L}})g_0(\frac{\boldsymbol{\Lambda}}{2^{m+L-1}}) \cdots g_0(\frac{\boldsymbol{\Lambda}}{2^{m}}) \Lambda^{\frac12}\mathbf V^\top + \sum^{L-1}_{\ell=0}\sum^K_{k=1}\mathcal{V}_{k,\ell}\mathcal{W}_{k,\ell}\\
=&\mathbf U \Lambda^{\frac12}g_0(\frac{\boldsymbol{\Lambda}}{2^{m}})   \cdots g_0(\frac{\boldsymbol{\Lambda}}{2^{m+L-1}})\left(\sum^K_{k=0}g^2_k(\frac{\boldsymbol{\Lambda}}{2^{m+L}})\right)\cdot \\
&\phantom{\mathbf U \Lambda^{\frac12}} g_0(\frac{\boldsymbol{\Lambda}}{2^{m+L-1}}) \cdots g_0(\frac{\boldsymbol{\Lambda}}{2^{m}}) \Lambda^{\frac12}\mathbf V^\top + \sum^{L-1}_{\ell=0}\sum^K_{k=1}\mathcal{V}_{k,\ell}\mathcal{W}_{k,\ell}\\
=& \mathcal{V}_{0,L-1}\mathcal{W}_{0,L-1} + \sum^{L-1}_{l=0}\sum^K_{k=1}\mathcal{V}_{k,l}\mathcal{W}_{k,l} \\
=& \cdots = \\
=& \mathcal{V}_{0,0}\mathcal{W}_{0,0} + \sum^K_{k=1} \mathcal{V}_{k,0}\mathcal{W}_{k,0}\\
=& \mathbf U \Lambda^{\frac12}\left( \sum^K_{k=0}  g^2_k(\frac{\boldsymbol{\Lambda}}{2^{m}})\right)\Lambda^{\frac12}\mathbf V =  \mathbf U \Lambda \mathbf V = \widehat{\mathbf A}.
\end{align*}
This completes the proof.
\end{proof}

\subsection{SVD-Framelet Signal Decomposition and Reconstruction}
To explore SVD-GCN layer from graph signal point of view, we define the following graph SVD-framelets.

Suppose $\{(\lambda_{i}, \mathbf u_{i}, \mathbf v_{i})\}^N_{i=1}$  are the singular values and singular vector triples for the noramlized adjacency matrix $\widehat{\mathbf A}$ of graph $\mathcal{G}$ with $N$ nodes, such that $\{\lambda_i\}$ in decreasing order and $\mathbf u_i$ and $\mathbf v_i$ are columns of $\Ub$ and $\Vb$, respectively. For a
graph $\mathcal{G}$, given a set of modulation functions $\mathcal{F} = \{g_0(\xi), g_1(\xi), ..., g_K(\xi)\}$,
the forward SVD-framelets at scale level $\ell = 0,1,...,L$ are defined, for $k = 1,...,K$, by
\begin{align}
\begin{aligned}
\phi_{\ell, p}(q) =\sum^N_{i=1}\sqrt{\lambda_i}g_0(\frac{\lambda_i}{2^{\ell}}) \mathbf u_i (p) \mathbf u_i(q); \\
\psi^k_{\ell, p}(q) =\sum^N_{i=1}\sqrt{\lambda_i}g_k(\frac{\lambda_i}{2^{\ell}}) \mathbf v_i (p) \mathbf v_i(q)
\end{aligned}
\end{align}
and its corresponding backward SVD-framelets by
\begin{align}
\begin{aligned}
\overline{\phi}_{\ell, p}(q) =\sum^N_{i=1}\sqrt{\lambda_i}g_0(\frac{\lambda_i}{2^{\ell}}) \mathbf u_i (p) \mathbf v_i(q); \\
\overline{\psi}^k_{\ell, p}(q) =\sum^N_{i=1}\sqrt{\lambda_i}g_k(\frac{\lambda_i}{2^{\ell}}) \mathbf u_i (p) \mathbf v_i(q)
\end{aligned}
\end{align}
for all nodes $q, p$ and $\phi_{\ell, p}$ ($\overline{\phi}_{\ell, p}$) or $\psi^k_{\ell, p}$ ($\overline{\psi}_{\ell, p}$) are the low-pass or high-pass SVD-framelet translated at node $p$.

Similar to the standard undecimated framelet system \cite{dong2017sparse}, we can define two SVD-framelet systems as follows:
\begin{align}
    \text{SVD-UFS-F}_L(\mathcal{F}; \mathcal{G}):=& \{\phi_{0, p}:p\in\mathcal{V}\} \cup \notag\\
    &\{\psi^k_{\ell,p}: p\in\mathcal{V}, \ell = 0, ..., L\}^K_{k=1}, \label{Eq:new1}\\
    \text{SVD-UFS-B}_L(\mathcal{F}; \mathcal{G}):=& \{\overline{\phi}_{0, p}:p\in\mathcal{V}\} \cup \notag\\
    &\{\overline{\psi}^k_{\ell,p}: p\in\mathcal{V}, \ell = 0, ..., L\}^K_{k=1}. \label{Eq:new2}
\end{align}

Then the signal transform $\xb' = \widehat{\Ab}\xb$ can be implemented via the SVD-Framelet systems as shown in the following theorem

\begin{theorem}[SVD-Framelet Transform]\label{Them2} Given the above definition of both forward and backward SVD-framelet systems, we have
\begin{align}
\mathbf x' =  \sum_{p\in\mathcal{V}}  \langle \phi_{L, p}, \mathbf x\rangle \overline{\phi}_{L, p} + \sum^K_{k=1}\sum^L_{\ell=0} \sum_{p\in\mathcal{V}} \langle \psi^k_{\ell, p}, \mathbf x\rangle \overline{\psi}_{\ell, p}.
\end{align}
\end{theorem}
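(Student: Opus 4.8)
The plan is to lift the matrix identity of Theorem~\ref{Them1} to the signal level: I would show that the right-hand side, read node-by-node, is exactly $(\mathcal V\mathcal W)\xb$, after which $\mathcal V\mathcal W = \widehat{\Ab}$ finishes the argument and gives $\xb' = \widehat{\Ab}\xb$. First I would unpack the analysis coefficients. Writing $\langle\phi_{L,p},\xb\rangle = \sum_{q}\phi_{L,p}(q)\,\xb(q)$ and substituting the spectral expansion of $\phi_{L,p}$, the inner sum over $q$ contracts against $\xb$ into spectral coordinates (of the form $\mathbf v_i^\top\xb$), leaving precisely the $p$-th entry of $\mathcal W_{0,L}\xb$; the same unpacking shows $(\langle\psi^k_{\ell,p},\xb\rangle)_{p}$ is the vector $\mathcal W_{k,\ell}\xb$ for each band $(k,\ell)$.

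Next I would treat synthesis symmetrically. For an arbitrary coefficient vector $\mathbf c$, expanding $\sum_{p\in\mathcal V}\mathbf c(p)\,\overline{\phi}_{L,p}(r)$ and $\sum_{p}\mathbf c(p)\,\overline{\psi}^k_{\ell,p}(r)$ should reproduce $(\mathcal V_{0,L}\mathbf c)(r)$ and $(\mathcal V_{k,\ell}\mathbf c)(r)$, respectively. Substituting $\mathbf c = \mathcal W_{0,L}\xb$ and $\mathbf c = \mathcal W_{k,\ell}\xb$ then matches each summand of the theorem with $\mathcal V_{0,L}\mathcal W_{0,L}\xb$ and $\mathcal V_{k,\ell}\mathcal W_{k,\ell}\xb$. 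Summing the low-pass term and the high-pass bands over $k=1,\dots,K$ and $\ell=0,\dots,L$ assembles $(\mathcal V\mathcal W)\xb$, and Theorem~\ref{Them1} closes the chain.

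The step I expect to be the main obstacle --- and would write out most carefully --- is the collapse of the node sum $\sum_{p\in\mathcal V}$ in the synthesis step, i.e. the Parseval/orthonormality bookkeeping. In the directed setting this is delicate because two distinct orthonormal families $\Ub,\Vb$ are in play: the reduction $\sum_{p}\mathbf v_i(p)\mathbf v_j(p) = \delta_{ij}$ (or the analogous one for $\Ub$) only triggers when the forward and backward framelets share the \emph{same} singular-vector family in their $p$-argument, with the dual family appearing solely in the synthesized $r$-argument. I would therefore fix the pairing so that analysis projects onto one system and synthesis reconstructs onto its dual, ensuring the shared $p$-system supplies $\Vb^\top\Vb = \Ub^\top\Ub = \Ib$; once that alignment is in place, the factor $\sqrt{\lambda_i}\cdot\sqrt{\lambda_i} = \lambda_i$ and the quasi-framelet identity $\sum_{k=0}^K g_k^2\equiv 1$ telescope across the levels exactly as in Theorem~\ref{Them1}, and the remaining manipulation is routine.
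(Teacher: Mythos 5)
Your plan is essentially the paper's own argument: the paper's proof is a single sentence asserting that the identity is just Theorem~\ref{Them1} read through the rows and columns of $\mathcal{W}$ and $\mathcal{V}$, which is exactly the lift you carry out (analysis coefficients $=\mathcal{W}_{k,\ell}\xb$, synthesis $=\mathcal{V}_{k,\ell}$, then $\mathcal{V}\mathcal{W}=\widehat{\Ab}$). The obstacle you single out is real and deserves the care you propose: as printed, $\psi^k_{\ell,p}$ carries $\mathbf v_i$ in its $p$-slot while $\overline{\psi}^k_{\ell,p}$ carries $\mathbf u_i$ there, so the node sum produces $\Vb^\top\Ub\neq\Ib$ unless the forward/backward pairing is realigned (and the level-$\ell$ framelets are given the cascaded $g_0$ factors matching \eqref{Eq:5} and \eqref{Eq:5a}, without which the telescoping fails) exactly as you indicate --- a step the paper's one-line proof silently skips.
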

\begin{proof}
The decomposition here is indeed the re-writting of $\mathbf x' = \widehat{A}\mathbf x$ in terms of columns of all the decomposition and reconstruction operators $\mathcal{W}$ and $\mathcal{V}$ under the modulation identity condition \eqref{eq:3}.
\end{proof}

This is to say that the transformed signal $\mathbf x'$ can be written as the linear combination of the backward SVD-framelet system with coefficients on the original signal on the forward SVD-framelet system. Thus the signal filtering can be conducted by filtering over the forward SVD-framelet coefficients $\langle \psi^k_{\ell, p}, \mathbf x\rangle$.

%
\begin{figure*}[htp]
    \centering
    \includegraphics[width=0.9\textwidth]{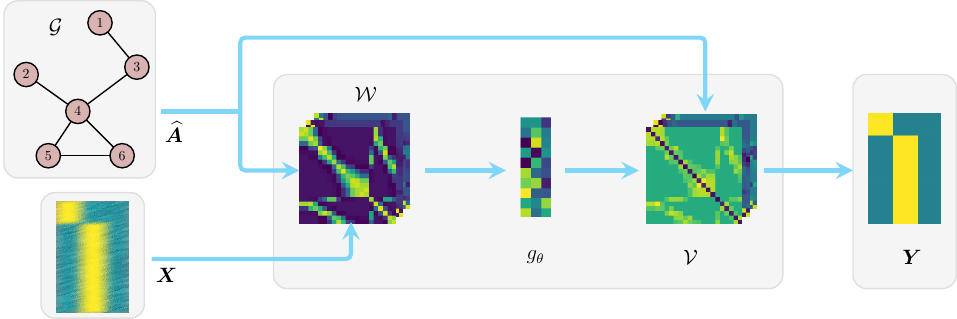}
    \caption{SVD Framelet Layer converts the input node feature $\mathbf X$ by using SVD framelet matrices $\mathcal{W}$ and $\mathcal{V}$ along with learnable filters $g_{\theta}$ to the new features $\mathbf Y$, as demonstrated in the simplified versions \eqref{Eq:9} and \eqref{Eq:10}.}
    \label{Fig:1}
\end{figure*}

\subsection{Simplified SVD-Framelet Filtering and the Model Architecture}
Based on Theorem~\ref{Them1} and \ref{Them2}, for a demonstration, we define the following (simplified Level-1: corresponding to $L=0$) SVD-Framelet filtering
\begin{align}
\Yb = \sigma\left(\sum^K_{k=0}(\Ub  \Lambda^{\frac12}g_k(\Lambda) \Vb^\top)\cdot g^i_{\theta} \circ (\Vb g_k(\Lambda)  \Lambda^{\frac12}\Vb^\top \Xb \Wb) \right)  \label{Eq:9}
\end{align}
where $g^k_{\theta}$'s are the filters corresponding to each modulation function $g_k$ individually, $\Wb$ is learnable feature transformation weight and $\sigma$ is an activation function.

It is not necessary to transform the signal into the backward SVD-framelet space. We may consider the following signal transformation within the forward SVD-framelet space as the following way
\begin{align}
\Yb = \sigma\left(\sum^K_{k=0}(\Vb  \Lambda^{\frac12}g_k(\Lambda) \Vb^\top)\cdot g^k_{\theta} \circ (\Vb g_k(\Lambda)  \Lambda^{\frac12}\Vb^\top \Xb \Wb) \right) \label{Eq:10}
\end{align}
in which $\Ub$ is replaced by $\Vb$.

It is not hard to write their counterparts for level 2. We call scheme \eqref{Eq:9} SVD-Framelet-I (including multiple levels) and \eqref{Eq:10} (including multiple levels) SVD-Framelet-II, however in the experiment part, we mainly focus on SVD-Framelet-I.  

Figure~\ref{Fig:1} shows the information flow for one SVD framelet layer. First the SVD will be conducted on the graph (normalized) adjacency matrix $\widehat{\mathbf A}$ to produce all the framelet matrices $\mathcal{W}$ (primary) and $\mathcal{V}$ (dual) up to a given level $L$,  then the primary framelet matrices $\mathcal{W}$ will be applied to the input node signal matrix $\mathbf X$, followed by the learnable filters on each node across all the channels, then the dual framelet matrices $\mathcal{V}$ will bring the signal back to the transformed signal domain, i.e., $\mathbf X'$, which will be pipelined to the next layers or downstream tasks.  When multiple SVD layers are used in the final architecture, all the primary and dual framelet matrices are shared through all the SVD layers.


\subsection{Faster filtering for large graphs based on Chebyshev polynomials}\label{subsec:3.4}
For large graphs, performing SVD on the adjacency matrix can be costly. We thus consider an approximated filtering based on Chebyshev polynomials. We will take another strategy to realize the idea presented in 
\cite{onuki2017fast} and to derive the fast filtering for singular values.  Suppose that for the normalized adjacency matrix $\widehat{\mathbf A} = \Ub\Lambda \Vb^\top = \widehat{\mathbf A}\Vb \Vb^\top$ and $\widehat{\mathbf A}^\top \widehat{\mathbf A} = \Vb \Lambda^2 \Vb^\top$. In other words, the columns of $\Vb$ gives the eigenvector systems of $\widehat{\mathbf A}^\top \widehat{\mathbf A}$ for which we can conduct framelet analysis as done for Laplacian matrix.

For a given set of framelet or quasi-framelet functions $\mathcal{F} = \{g_0(\xi), g_1(\xi), ..., g_K(\xi)\}$ defined on $[0, \pi]$, as for \eqref{Eq:3} - \eqref{Eq:5}, we define the following framelet or quasi-framelet signal decomposition operators (without confusion we use the same notation)
\begin{align}
    \mathcal{W}_{0,L} =& \mathbf V g_0(\frac{\boldsymbol{\Lambda}^2}{2^{L+m}}) \cdots g_0(\frac{\boldsymbol{\Lambda}^2}{2^{m}})  \mathbf V^\top, \label{Eq:3b}\\
    \mathcal{W}_{k,0} =& \mathbf V g_k(\frac{\boldsymbol{\Lambda}^2}{2^{m}})  \mathbf V^\top, \text{for } k = 1, ..., K, \label{Eq:4b}\\
    \mathcal{W}_{k,\ell} =& \mathbf V g_k(\frac{\boldsymbol{\Lambda}^2}{2^{m+\ell}})g_0(\frac{\boldsymbol{\Lambda}^2}{2^{m+\ell-1}}) \cdots g_0(\frac{\boldsymbol{\Lambda}^2}{2^{m}})  \mathbf V^\top, \label{Eq:5b}\\
    &\text{for } k=1, ..., K, \ell = 1, ..., L.\notag
\end{align}
Note that we have $\boldsymbol{\Lambda}^2$ inside all $g$'s but no the extra term $\boldsymbol{\Lambda}^{\frac12}$.

To avoid any explicit SVD decomposition for $\Vb$, we consider a polynomial approximation to each modulation function $g_j(\xi)$ ($j=0, 1, ..., K$). We  approximate $g_j(\xi)$ by Chebyshev polynomials $\mathcal{T}^n_j(\xi)$ of a fixed degree $n$ where the integer $n$ is chosen such that the Chebyshev polynomial approximation is of high precision. For simple notation, in the sequel, we use  $\mathcal{T}_j(\xi)$ instead of $\mathcal{T}^n_j(\xi)$. Then the new SVD-framelet transformation matrices defined in \eqref{Eq:3b} - \eqref{Eq:5b} can be approximated by, {for } $k=1, ..., K, \ell = 1, ..., L$,
\begin{align}
    \mathcal{W}_{0,L} &\approx  \mathcal{T}_0(\frac1{2^{L+m}}\widehat{\mathbf A}^\top \widehat{\mathbf A}) \cdots \mathcal{T}_0(\frac{1}{2^{m}}\widehat{\mathbf A}^\top \widehat{\mathbf A}), \label{eq:Ta}\\
    \mathcal{W}_{k,0} &\approx    \mathcal{T}_k(\frac1{2^{m}}\widehat{\mathbf A}^\top \widehat{\mathbf A}),  \label{eq:Tb}\\
    \mathcal{W}_{k,\ell} &\approx \mathcal{T}_k(\frac{1}{2^{m+\ell}}\widehat{\mathbf A}^\top \widehat{\mathbf A})\mathcal{T}_0(\frac{1}{2^{m+\ell-1}}\widehat{\mathbf A}^\top \widehat{\mathbf A}) \cdots \mathcal{T}_0(\frac{1}{2^{m}}\widehat{\mathbf A}^\top \widehat{\mathbf A}), \label{eq:Tc}
\end{align}
Then there is no need for SVD of the adjacency matrix to calculate all the framelet matrices. So a new simplified one scale level SVD-Framelet-III (corresponding to $L=0$) can be defined as
\begin{align}
\Yb = \sigma\left(\widehat{\Ab}\sum^K_{k=0}\mathcal{W}^\top_{k,0}\cdot g^k_{\theta}\circ (\mathcal{W}_{k,0}\Xb \Wb)\right) \label{Eq:11}
\end{align}
Similarly the version of multiple levels can be easily written out.



\section{Experiments}\label{Sec:4}



Our main purpose is to demonstrate the proposed SVD-GCN is powerful in assisting graph learning. We will
evaluate our model in various graph learning tasks, including node classification, graph feature denoising, and applications to larger scale graph data. Since our focus is on the directed graph learning tasks, other spectral methods such as UFG \cite{ZhengZhouGaoWangLioLiMontufar2021} and QUFG \cite{YangZhouYinGao2022}  are not considered.  The experiment code can be found at \url{https://github.com/ThisIsForReview/SVD-GCN} for review.

\subsection{Datasets and Baselines}\mbox{}
\indent\textbf{Datasets} We utilize several digraph datasets from Python package \texttt{Torch\_Geometric} \url{https://pytorch-geometric.readthedocs.io/} 
datasets in the experiments including: \texttt{cora\_ml}, \texttt{citeseer}, and \texttt{citeseer\_full} which are citation networks, as well as the Amazon Computers and Amazon Photo co-purchase networks: \texttt{amazon\_photo} and \texttt{amazon\_cs}, see \url{https://github.com/EdisonLeeeee/GraphData}.  

\begin{table}
\caption{Statistics of the datasets}
    \label{Table1}
    \centering
    \begin{tabular}{c|c|c|c|c}\hline
        Datasets & \#Nodes & \#Edges & \#Classes & \#Features   \\ \hline
        cora\_ml \cite{bojchevski2017deep} & 2,995 & 8,416 & 7 & 2,879\\
        citeseer \cite{yang2016revisiting} & 3,312 & 4,715 & 6 & 3703\\
        citeseer\_full \cite{chen2018fastgcn} & 3,327 & 3,703 & 6 & 602\\
        amazon\_photo \cite{shchur2018pitfalls} & 7,650 & 143,663 & 8 & 745\\
        amazon\_cs \cite{shchur2018pitfalls} & 13,752 & 287,209 & 10 & 767\\
        \hline
        coral\_full  \cite{BojchevskiGuennemann2018} & 19,793 & 65,311 & 70 & 8,710 \\ \hline
    \end{tabular}
\end{table}

Here is the brief descriptions about the datasets we use to conduct the experiments and Table 1 summarizes the statistics of five datasets:
\begin{itemize}
\item \texttt{cora\_ml} \cite{bojchevski2017deep}: Cora is a classic citation network dataset while Cora\_ml is a small subset of dataset that \cite{bojchevski2017deep} extracted from the entire original network Cora dataset and cora\_ml is also a directed network dataset, which means that the edge between all pairs of nodes is directed, ie. A pointed to B means that A cited B.

\item \texttt{citeseer} \cite{yang2016revisiting} \& \texttt{citeseer\_full}\cite{chen2018fastgcn}: Citeseer is also a citation network dataset whose nodes represent documents and paper while edges are citation links. Citeseer\_full is extracted from the same original network dataset as citeseer that nodes represent documents and edges represent citation links. The only difference is that dataset split type is full, which means that except the nodes in the validation and test sets, all the rest of nodes are used in training set.

\item \texttt{amazon\_photo} \& \texttt{amazon\_cs} \cite{shchur2018pitfalls}: The Amazon Computers and Amazon Photo network datasets are both extracted from Amazon co-purchase Networks. The nodes represents the goods and edges represent that the two nodes(goods) connected are frequently bought together, and the product reviews are bag-of-words node features.
\end{itemize}

\begin{table*}[ht]
\caption{Results for Classification Accuracy (\%): Part Results from \cite{TongLiangSunLiRosenblumLim2020}}
    \label{Table2}
    \centering
    \begin{tabular}{c|c|c|c|c|c}\hline
        Models & cora\_ml & citeseer & citeseer\_full & amazon\_photo & amazon\_cs\\ \hline
        ChebNet \cite{DefferrardBressonVandergheynst2016} & \cellcolor{white!50!blue!20!}64.02$\pm$1.5 & 56.46$\pm$1.4 & 62.29$\pm$0.3 & \cellcolor{white!50!blue!40!}80.91$\pm$1.0& \cellcolor{white!50!blue!45!}73.25$\pm$0.8\\
        GCN \cite{KipfWelling2017}& 53.11$\pm$0.8 & 54.36$\pm$0.5 & 64.71$\pm$0.5  & 53.20$\pm$0.4 & 60.50$\pm$1.6\\
        SGC \cite{WuSouzaZhangFiftyYuWeinberger2019} & 51.14$\pm$0.6 & 44.07$\pm$3.5 & 56.56$\pm$0.4 & 71.25$\pm$1.3 & \cellcolor{white!50!blue!55!}76.17$\pm$0.1\\
        APPNP \cite{KlicperaBojchevskiGuennemann2019} & \cellcolor{white!50!blue!30!}70.07$\pm$1.1 & \cellcolor{white!50!blue!70!}65.39$\pm$0.9 & \cellcolor{white!50!blue!40!}67.53$\pm$0.4 & \cellcolor{white!50!blue!35!}79.37$\pm$0.9 & \cellcolor{white!50!blue!20!}63.16$\pm$1.4\\
        InfoMax \cite{VelickovicFedusWilliamHamiltonLioBengioDevonHjelm2019} & 58.00$\pm$2.4 &  \cellcolor{white!50!blue!20!}60.51$\pm$1.7 & \cellcolor{white!50!blue!55!}72.93$\pm$1.1& \cellcolor{white!50!blue!25!}74.40$\pm$1.2 & 47.32$\pm$0.7\\
        GraphSAGE \cite{HamiltonYingLeskovec2017} & \cellcolor{white!50!blue!45!}72.06$\pm$0.9 & \cellcolor{white!50!blue!45!}63.19$\pm$0.7 & \cellcolor{white!50!blue!20!}65.18$\pm$0.8  & \cellcolor{white!50!blue!58!}87.57$\pm$0.9 & \cellcolor{white!50!blue!65!}79.29$\pm$1.3\\
        GAT \cite{VelickovicCucurullCasanovaRomeroLioBengio2018} & \cellcolor{white!50!blue!38!}71.91$\pm$0.9 & \cellcolor{white!50!blue!40!}63.03$\pm$0.6  & \cellcolor{white!50!blue!30!}66.67$\pm$0.4 &  \cellcolor{white!50!blue!80!}89.10$\pm$0.7 & \cellcolor{white!50!blue!70!}79.45$\pm$1.5\\
        DGCN \cite{TongLiangSunRosenblumLim2020} & \cellcolor{white!50!blue!45!}75.02$\pm$0.5 & \cellcolor{white!50!blue!80!}{66.00}$\pm${0.4} & \cellcolor{white!50!blue!85!}78.35$\pm$0.3 & \cellcolor{white!50!blue!50!}83.66$\pm$0.8 & OOM\\
        SIGN \cite{RossiFrascaChamberlainEynardBronsteinMonti2020} & \cellcolor{white!50!blue!25!}64.47$\pm$0.9 & \cellcolor{white!50!blue!30!}60.69$\pm$0.4 & \cellcolor{white!50!blue!77!}77.44$\pm$0.1 & \cellcolor{white!50!blue!20!}74.13$\pm$1.0 & \cellcolor{white!50!blue!35!}69.40$\pm$4.8\\
        DiGCN-PR \cite{TongLiangSunLiRosenblumLim2020}& \cellcolor{white!50!blue!60!}77.11$\pm$0.5 & \cellcolor{white!50!blue!55!}64.77$\pm$0.6 & \cellcolor{white!50!blue!65!}74.18$\pm$0.7 &  OOM & OOM\\
        DiGCN-APPR \cite{TongLiangSunLiRosenblumLim2020}& \cellcolor{white!50!blue!55!}77.01$\pm$0.4 & \cellcolor{white!50!blue!60!}64.92$\pm$0.3 &  \cellcolor{white!50!blue!70!}74.52$\pm$0.4 &   \cellcolor{white!50!blue!65!}88.72$\pm$0.3 &  \cellcolor{white!50!blue!90!}85.55$\pm$0.4\\
        \textbf{SVD-GCN (Ours)} & \cellcolor{white!50!blue!70!}\textbf{78.84}$\pm$\textbf{0.29} &\cellcolor{white!50!blue!95!}\textbf{66.15}$\pm$\textbf{0.39}$^*$ &\cellcolor{white!50!blue!95!}\textbf{80.95}$\pm$\textbf{0.36} & \cellcolor{white!50!blue!70!}\textbf{88.76}$\pm$\textbf{0.21} &\cellcolor{white!50!blue!90!}\textbf{85.55}$\pm$\textbf{0.31} \\ \hline
        DiGCN-APPR-IB \cite{TongLiangSunLiRosenblumLim2020}& \cellcolor{white!50!blue!90!}80.25$\pm$0.5 & \cellcolor{white!50!blue!85!}{66.11}$\pm${0.7} & \cellcolor{white!50!blue!90!}80.10$\pm$0.3 & \cellcolor{white!50!blue!95!}\textbf{90.02}$\pm$\textbf{0.5} & \cellcolor{white!50!blue!100!}\textbf{85.94}$\pm$\textbf{0.5}\\
        \textbf{SVD-GCN-IB (Ours)} & \cellcolor{white!50!blue!100!}\textbf{81.11}$\pm$\textbf{0.24} & \cellcolor{white!50!blue!50!}64.26$\pm$0.77 & \cellcolor{white!50!blue!100!}\textbf{83.12}$\pm$\textbf{0.68} & \cellcolor{white!50!blue!80!}89.38$\pm$0.48 & \cellcolor{white!50!blue!85!}85.03$\pm$0.37 \\ \hline
    \end{tabular}
\end{table*}

\textbf{Baseline Models} We will compare our model to twelve state-of-the-art models including: spectral-based GNNs such as ChebNet \cite{DefferrardBressonVandergheynst2016}, GCN \cite{KipfWelling2017}, APPNP \cite{KlicperaBojchevskiGuennemann2019}, SGC \cite{WuSouzaZhangFiftyYuWeinberger2019} and InfoMax \cite{VelickovicFedusWilliamHamiltonLioBengioDevonHjelm2019}; spatial-based GNNs having GraphSAGE \cite{HamiltonYingLeskovec2017} and GAT \cite{VelickovicCucurullCasanovaRomeroLioBengio2018}; Graph Inception including SIGN \cite{RossiFrascaChamberlainEynardBronsteinMonti2020}; Digraph GNNs containing DGCN; Digraph Inception having DiGCN-PR \cite{TongLiangSunLiRosenblumLim2020}, DiGCN-APPR \cite{TongLiangSunLiRosenblumLim2020} and DiGCN-APPR-IB \cite{TongLiangSunLiRosenblumLim2020}.

At the time of preparing the draft, we are not aware the recently published paper \cite{zhang2021magnet} on MagNet convolutional network for directed graphs. Thus we do not include here for comparison. Nevertheless, we highlight that the number of parameters is double the size compared to our model and thus results on \cite{zhang2021magnet} are not comparable. Further, the standard deviation of MagNet is much larger compared to our model, indicating less model robustness.

\textbf{Setup}  We design our SVD-Framelet-I model \eqref{Eq:9} with one convolution layer plus a fully connected linear layer for learning the graph embedding, the output of which is proceeded by a softmax activation for final prediction. Most hyperparameters are set to default in our program, except for learning rate, weight decay, and hidden units in the layers. We conduct a grid search for fine tuning on these hyperparameters from the pre-defined search space. All the models including those compared models that we need re-conduct are trained with the ADAM optimizer. The maximum number of epochs is basically 200.

\textbf{Hardware} Most of experiments run in PyTorch 1.6 on NVIDIA® Tesla V100 GPU with 5,120 CUDA cores and 16GB HBM2 mounted on an HPC cluster and some experiments run with PyTorch 1.8 on PC with Intel(R) Core(TM) i5-8350U CPU@1.90GHz with 64 bits Window 10 operating system and 16GB RAM. 

\subsection{Graph Node Classification}\label{subsec:4.2}
We conduct our experiments on dataset \texttt{cora\_ml}, \texttt{citeseer}, \texttt{cites} \texttt{eer\_full}, \texttt{amazon\_cs} and \texttt{amazon\_photo}.  In this set of experiments, the hyperparameters are searched as the following ways: 20 nodes per classes are randomly selected for training with 500 nodes as validation and the rest in testing; the basic epoch = 200 (but for \texttt{citeseer} it is 500); we use the two level framelets (corresponding to $L=1$), the scale in framelets (i.e. $s$ in $s^l$ to replace $2^l$) is tested for 1.1, 1.5 and 2.0; the number of hidden features = 16, or 32, or 64; the dropout ratio = 0.1, 0.3, or 0.6; the framelet modulation function is either \texttt{Linear} or \texttt{Entropy} with hyperparameter $\alpha = 0.1, 0.3, 0.5, 0.7$ and $0.9$. We find that both \texttt{Linear} and \texttt{Entropy} are comparable, thus the reported results are based on the \texttt{Linear} framelets. The network architecture consists of one SVD framelet layer (SVD-Framelet-I or SVD-Framelet-II for \texttt{citeseer}) and a fully connected linear layer to the softmax output layer.


The whole experimental results are summarized in Table~\ref{Table2}. Except for \texttt{citeseer\_full}, we copied the results from \cite{TongLiangSunLiRosenblumLim2020} for convenience. The darker blue colour of the result cell represents the higher accuracy rate the approach generates using that dataset (within that column). Compared with all the other state-of-the-art baseline methods, the proposed method obtains the best performance or comparable performance, see all the rows above the two bottom rows.   For datasets \texttt{cora\_ml}, \texttt{citeseer} and \texttt{citeseer\_full}, more than 1\% gains have been obtained by such as a simply convolution layer. For two Amazon datasets, the SVD-GCN is comparable the most state-of-the-art model DiGCN-APPR. For \texttt{citeseer} data, we use the simplified model SVD-Frameelet-II \eqref{Eq:10}. We also note that for the \texttt{citeseer\_full} dataset, the SVD-GCN has a gain of more than 6\% accuracy than the state-of-the-art DiGCN-PR and DiGCN-APPR \cite{TongLiangSunLiRosenblumLim2020}.

The authors of \cite{TongLiangSunLiRosenblumLim2020} further present the Digraph Inception Convolutional Networks (DiGCN-APPR-IB) which utilizes digraph convolution and $k$th-order proximity to achieve larger receptive fields and learn multi-scale features in digraphs. We also adapt their strategy and combine their architecture in our SVD-Framelet-I framelet, resulting in SVD-GCN-IB. The combined model achieves the similar performance as the original DiGCN-APPR-IB, see the last two rows in Table~\ref{Table2}. The results demonstates that increasing receptive fields does improve the performance of both DiGCN and SVD-GCN.

\subsection{Large Scale Experiment}
In this experiment, we wish to demonstrate that the fast SVD-Framelet-III introduced in Section~\ref{subsec:3.4}. All the experiments will be conducted on the dataset \texttt{cora\_full}, see \url{https://github.com/EdisonLeeeee/GraphData}. \texttt{cora\_full}, introduced in \cite{BojchevskiGuennemann2018}, is the full extension of the smaller citation network dataset \texttt{cora\_ml} and it consists of 19,793 nodes and 65,311 edges with node feature dimension 8,710. The number of node classes is 70. This is a quite large graph for most graph neural networks. In fact, for graphs beyond 15K nodes we had to revert to slow training on the CPU since the data did not fit on the GPU memory (12GB).

To get a sense on how reliable and robust our simplified fast SVD-Framelet-III based on the Chebyshev polynomial approximation is, we conduct the experiment only against the manageable benchmark GCN. In fact, for many state-of-the-art algorithm like DiGCN-PR or DiGCN-APPR, we failed to make the program run on the CPU.

Our experimental results are reported in Table~\ref{Table4}.  We follow the similar setting as the first set of experiments in Section~\ref{subsec:4.2}: we choose 20 nodes per class (in total 70 classes), 500 random nodes for validation and the rest are used as testing nodes. This time, we fixed the framelet scale to $s=1.1$ with the \texttt{Linear} framelet modulation functions and the dropout ratio to 0.1 with activation function \texttt{relu} applied to the output from the SVD framelet layer. The overall network archictecture consists of one SVD framelet layer and a fully connected linear layer to the output softmax layer. As the dimension of the node feature is almost 8,710, we tested three different choices of the hidden unit size from 64, 128 and 256. Each experiment was run 10 replicates and each replicate runs 200 epochs with a fixed learning rate 0.005. We report the average test accuracy and their std. The experiment shows that the number of hidden unit 128 is appropriate for this dataset and that the SVD-GCN has performance gain of 1 - 3\% more in all the cases.
\begin{table}
\caption{Results between SVD-Framelet-III and GCN over \texttt{cora\_full}}
    \label{Table4} 
    \centering
    \begin{tabular}{c|c|c}\hline
       NoHiddenUnits & GCN & \textbf{SVD-GCN (Ours)}\\ \hline
      64 & 52.54$\pm$0.41 & \textbf{53.52}$\pm$\textbf{0.33}   \\    
       128 & 53.75$\pm$0.17   & \textbf{56.17}$\pm$\textbf{0.16}   \\ 
       256  & 54.33$\pm$0.33   & \textbf{57.30}$\pm$\textbf{0.21}    \\ \hline  
    \end{tabular}
\end{table}

\subsection{Denoising Capability and Robustness}
We conduct a denoising experiment to assess the robustness of the proposed SVD-Framelet GCNs against the state-of-the-art method DiGCN-APPR. The experiments are conducted on the benchmark dataset \texttt{cora\_ml} for convenience. The data features used in \cite{TongLiangSunLiRosenblumLim2020} is normalized data with values ranged between 0.0 and 1.0. For the purpose of testing denoising capability of the two models SVD-Framelet-I and DiGCN-APPR, we randomly inject Gaussian noises of mean 0 and std (noise levels) ranging from 0.01 to 5. We report the results for noise levels 0.01, 0.05, 0.1, 0.5, 1.0, and 5.0 in Table~\ref{Table3}. We did not report the results for DiGCN-APPR for larger noise levels due to its poor figures. The darker color in the Table 4 represents the higher accuracy rate which further demonstrates the better denoising capability as it still can maintain a relatively high accuracy under different degrees of noise attack. If the accuracy rate is smaller than 40\%, then the result cell will not be colored since that result is quite low and not comparable. From Table 4, it is obvious that when the noise level is larger than 0.01, the DiGCN-APPR method's accuracy rate has dropped dramatically, meanwhile SVD-GCN method still maintain a relatively high accuracy under the noise level of 5.0, and the result is even higher than DiGCN-APPR method's accuracy at noise level of 0.01.

\textbf{Sensitivity analysis:} It is evident that DiGCN-APPR almost fails in denoising data, or in other words, it is quite easy to be attacked by noises. For example, the test accuracy degrades almost 24\% from the case of noise free to the case of noise level 0.01.  However, SVD-GCN has better denoising capalibity, which benefits from the framelet decomposition over the SVD ``frequency'' domain and being filtered in its learning process. The experimental results demonstrate SVD-GCN's performance consistency and robustness to the larger noise levels. From Figure~\ref{Fig:2}, we can observe that, at noise level of 50, the test accuracy is still above 50\% with acceptable standard deviation, which further proves its robustness and stability when encountering much larger noise attack.

\begin{figure}[htp]
    \centering
    \includegraphics[width=0.45\textwidth]{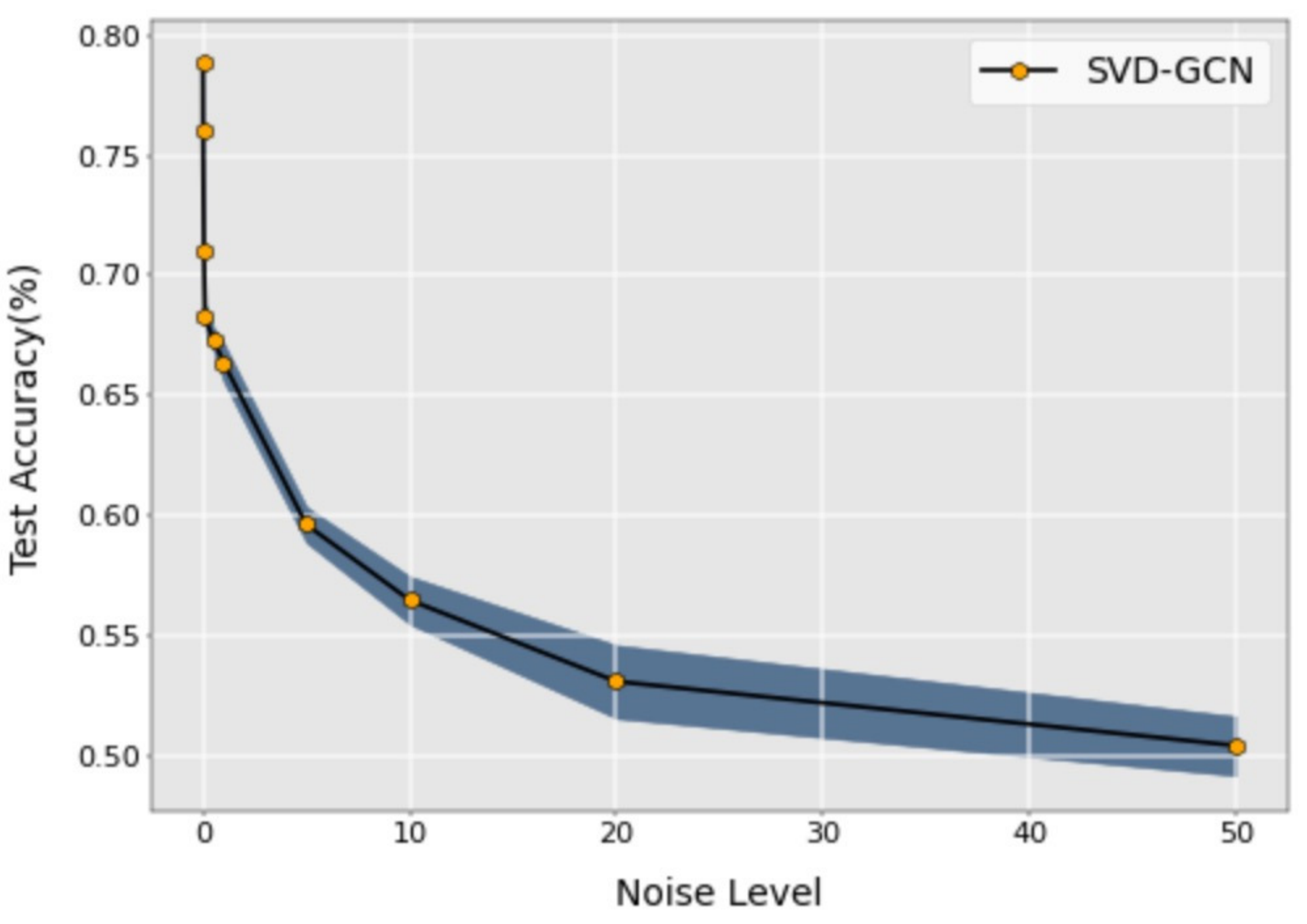}
    \caption{Node attribute perturbation analysis on \texttt{cora\_ml} dataset}
    \label{Fig:2}
\end{figure}

\begin{table}
\caption{Denoising Capability Comparions between SVD-Framelet-I and DiGCN-APPR over \texttt{cora\_ml}}
    \label{Table3}
    \centering
    \begin{tabular}{c|c|c}\hline
       NoiseLevel & DiGCN-APPR \cite{TongLiangSunLiRosenblumLim2020} & \textbf{SVD-GCN (ours)}\\ \hline
       $\sigma=0.0$ & \cellcolor{white!50!blue!80!}77.01$\pm$0.40 & \cellcolor{white!50!blue!85!}78.84$\pm$0.29\\
       $\sigma=0.01$ & \cellcolor{white!50!blue!45!}53.39$\pm$0.61 &  \cellcolor{white!50!blue!78!}76.04$\pm$0.51\\
       $\sigma=0.05$ & 35.72$\pm$0.80  &  \cellcolor{white!50!blue!70!}71.04$\pm$0.50\\
       $\sigma=0.1$ & 34.08$\pm$1.34  &  \cellcolor{white!50!blue!65!}68.25$\pm$0.57\\
       $\sigma=0.5$ & 30.40$\pm$1.92  & \cellcolor{white!50!blue!61!}67.37$\pm$0.47\\
       $\sigma=1.0$ & --- & \cellcolor{white!50!blue!58!}66.38$\pm$0.80\\
       $\sigma=5.0$ & --- & \cellcolor{white!50!blue!53!}59.63$\pm$0.74\\ \hline
    \end{tabular}
\end{table}






\section{Conclusions}\label{Sec:5}
In this paper, we explored the application of framelets over the dual orthogonal systems defined by singular vectors from singular value decomposition (SVD) for graph data and thus proposed a simple yet effective SVD-GCN for directed graphs. The successful improvement from SVD-GCN benefits from 
application of the graph SVD-framelets in  transforming and filetering directed graph signals. The experimental results manifest that the SVD-GCN outperforms all the baselines and the state-of-the-arts on the five commonly used benchmark datasets, which further proves that the proposed SVD-GCN's performance in directed graph learning tasks is remarkable.
The sensitivity analysis also demonstrates that this novel approach has strong denoising capability and it is robust to high level noising attack to node features. 
It has also been proved by experiments that the new fast SVD-GCN is convincingly accurate and reliable appropriate in dealing with large scale graph datasets. 

\bibliographystyle{plain}

\begin{thebibliography}{10}

\bibitem{atz2021geometric}
Kenneth Atz, Francesca Grisoni, and Gisbert Schneider.
\newblock Geometric deep learning on molecular representations.
\newblock {\em Nature Machine Intelligence}, pages 1--10, 2021.

\bibitem{battaglia2018relational}
Peter~W Battaglia, Jessica~B Hamrick, Victor Bapst, Alvaro Sanchez-Gonzalez,
  Vinicius Zambaldi, Mateusz Malinowski, Andrea Tacchetti, David Raposo, Adam
  Santoro, Ryan Faulkner, et~al.
\newblock Relational inductive biases, deep learning, and graph networks.
\newblock {\em arXiv preprint arXiv:1806.01261}, 2018.

\bibitem{bojchevski2017deep}
Aleksandar Bojchevski and Stephan G{\"u}nnemann.
\newblock Deep gaussian embedding of graphs: Unsupervised inductive learning
  via ranking.
\newblock {\em arXiv preprint arXiv:1707.03815}, 2017.

\bibitem{BojchevskiGuennemann2018}
Aleksandar Bojchevski and Stephan G\"{u}nnemann.
\newblock Deep {Gaussian} embedding of graphs: Unsupervised inductive learning
  via ranking.
\newblock In {\em International Conference on Learning Representations}, 2018.

\bibitem{bronstein2017geometric}
Michael~M Bronstein, Joan Bruna, Yann LeCun, Arthur Szlam, and Pierre
  Vandergheynst.
\newblock Geometric deep learning: going beyond euclidean data.
\newblock {\em IEEE Signal Processing Magazine}, 34(4):18--42, 2017.

\bibitem{ChangRongXuHuangSojoudiHuangZhu2021}
Heng Chang, Yu~Rong, Tingyang Xu, Wenbing Huang, Somayeh Sojoudi, Junzhou
  Huang, and Wenwu Zhu.
\newblock Spectral graph attention network with fast eigen-approximation.
\newblock In {\em DLG-KDD ’21}, 2021.

\bibitem{chen2018fastgcn}
Jie Chen, Tengfei Ma, and Cao Xiao.
\newblock Fastgcn: fast learning with graph convolutional networks via
  importance sampling.
\newblock {\em arXiv preprint arXiv:1801.10247}, 2018.

\bibitem{DefferrardBressonVandergheynst2016}
Micha\"{e}l Defferrard, Xavier Bresson, and Pierre Vandergheynst.
\newblock Convolutional neural networks on graphs with fast localized spectral
  filtering.
\newblock In {\em Proceedings of the 30th International Conference on Neural
  Information Processing Systems}, NIPS'16, pages 3844--3852, Red Hook, NY,
  USA, 2016. Curran Associates Inc.

\bibitem{dong2017sparse}
Bin Dong.
\newblock Sparse representation on graphs by tight wavelet frames and
  applications.
\newblock {\em Applied and Computational Harmonic Analysis}, 42(3):452--479,
  2017.

\bibitem{8038007}
Adnan Gavili and Xiao-Ping Zhang.
\newblock On the shift operator, graph frequency, and optimal filtering in
  graph signal processing.
\newblock {\em IEEE Transactions on Signal Processing}, 65(23):6303--6318,
  2017.

\bibitem{gori2005new}
Marco Gori, Gabriele Monfardini, and Franco Scarselli.
\newblock A new model for learning in graph domains.
\newblock In {\em Proceedings. 2005 IEEE International Joint Conference on
  Neural Networks, 2005.}, volume~2, pages 729--734. IEEE, 2005.

\bibitem{HamiltonYingLeskovec2017}
Will Hamilton, Zhitao Ying, and Jure Leskovec.
\newblock Inductive representation learning on large graphs.
\newblock In {\em Advances in Neural Information Processing Systems}, pages
  1024--1034, 2017.

\bibitem{KipfWelling2017}
Thomas~N. Kipf and Max Welling.
\newblock Semi-supervised classification with graph convolutional networks.
\newblock In {\em Proceedings of International Conference on Learning
  Representation}, pages 1--14, 2017.

\bibitem{KlicperaBojchevskiGuennemann2019}
Johannes Klicpera, Aleksandar Bojchevski, and Stephan G\"{u}nnemann.
\newblock Predict then propagate: Graph neural networks meet personalized
  pageran.
\newblock In {\em Proceedings of ICLR}, 2019.

\bibitem{Lim2020}
Lek-Heng Lim.
\newblock Hodge {Laplacians} on graphs.
\newblock {\em SIAM Review}, 62(3):685--715, 2020.

\bibitem{ma2019spectral}
Yi~Ma, Jianye Hao, Yaodong Yang, Han Li, Junqi Jin, and Guangyong Chen.
\newblock Spectral-based graph convolutional network for directed graphs.
\newblock {\em arXiv preprint arXiv:1907.08990}, 2019.

\bibitem{monti2018motifnet}
Federico Monti, Karl Otness, and Michael~M Bronstein.
\newblock Motifnet: a motif-based graph convolutional network for directed
  graphs.
\newblock In {\em 2018 IEEE Data Science Workshop (DSW)}, pages 225--228. IEEE,
  2018.

\bibitem{onuki2017fast}
Masaki Onuki, Shunsuke Ono, Keiichiro Shirai, and Yuichi Tanaka.
\newblock Fast singular value shrinkage with chebyshev polynomial approximation
  based on signal sparsity.
\newblock {\em IEEE Transactions on Signal Processing}, 65(22):6083--6096,
  2017.

\bibitem{RossiFrascaChamberlainEynardBronsteinMonti2020}
Emanuele Rossi, Fabrizio Frasca, Ben Chamberlain, Davide Eynard, Michael~M.
  Bronstein, and Federico Monti.
\newblock Sign: Scalable inception graph neural networks.
\newblock {\em CoRR}, abs/2004.11198, 2020.

\bibitem{sandryhaila2013discrete}
Aliaksei Sandryhaila and Jos{\'e}~MF Moura.
\newblock Discrete signal processing on graphs.
\newblock {\em IEEE Transactions on Signal Processing}, 61(7):1644--1656, 2013.

\bibitem{scarselli2008graph}
Franco Scarselli, Marco Gori, Ah~Chung Tsoi, Markus Hagenbuchner, and Gabriele
  Monfardini.
\newblock The graph neural network model.
\newblock {\em IEEE transactions on neural networks}, 20(1):61--80, 2008.

\bibitem{shchur2018pitfalls}
Oleksandr Shchur, Maximilian Mumme, Aleksandar Bojchevski, and Stephan
  G{\"u}nnemann.
\newblock Pitfalls of graph neural network evaluation.
\newblock {\em arXiv preprint arXiv:1811.05868}, 2018.

\bibitem{sperduti1997supervised}
Alessandro Sperduti and Antonina Starita.
\newblock Supervised neural networks for the classification of structures.
\newblock {\em IEEE Transactions on Neural Networks}, 8(3):714--735, 1997.

\bibitem{szegedy2016rethinking}
Christian Szegedy, Vincent Vanhoucke, Sergey Ioffe, Jon Shlens, and Zbigniew
  Wojna.
\newblock Rethinking the inception architecture for computer vision.
\newblock In {\em Proceedings of the IEEE conference on computer vision and
  pattern recognition}, pages 2818--2826, 2016.

\bibitem{TongLiangSunLiRosenblumLim2020}
Zekun Tong, Yuxuan Liang, Changsheng Sun, Xinke Li, David Rosenblum, and Andrew
  Lim.
\newblock Digraph inception convolutional networks.
\newblock {\em Advances in Neural Information Processing Systems}, 33, 2020.

\bibitem{TongLiangSunRosenblumLim2020}
Zekun Tong, Yuxuan Liang, Changsheng Sun, David~S. Rosenblum, and Andrew Lim.
\newblock Directed graph convolutional network.
\newblock {\em arXiv:2004.13970}, 2020.

\bibitem{vanDamOmidi2018}
E.~R. {van Dam} and G.~R. Omidi.
\newblock Directed strongly walk-regular graphs.
\newblock {\em Journal of Algebraic Combinatorics}, 47:623--639, 2018.

\bibitem{VelickovicCucurullCasanovaRomeroLioBengio2018}
Petar Veli{\v{c}}kovi{\'{c}}, Guillem Cucurull, Arantxa Casanova, Adriana
  Romero, Pietro Li{\`{o}}, and Yoshua Bengio.
\newblock Graph attention networks.
\newblock In {\em Proceedings of International Conference on Learning
  Representations}, 2018.

\bibitem{VelickovicFedusWilliamHamiltonLioBengioDevonHjelm2019}
Petar Veli\'{c}kovi\'{c}, William Fedus, William L, Hamilton, Pietro Li\'{o},
  Yoshua Bengio, and R~{DevonHjelm}.
\newblock Deep graph infomax.
\newblock In {\em Proceedings of International Conference on Learning
  Representations (ICLR)}, 2019.

\bibitem{WuSouzaZhangFiftyYuWeinberger2019}
Felix Wu, Amauri Souza, Tianyi Zhang, Christopher Fifty, Tao Yu, and Kilian
  Weinberger.
\newblock Simplifying graph convolutional networks.
\newblock In {\em Proceedings of the 36th International Conference on Machine
  Learning}, pages 6861--6871. PMLR, 2019.

\bibitem{wu2020comprehensive}
Zonghan Wu, Shirui Pan, Fengwen Chen, Guodong Long, Chengqi Zhang, and S~Yu
  Philip.
\newblock A comprehensive survey on graph neural networks.
\newblock {\em IEEE transactions on neural networks and learning systems},
  32(1):4--24, 2020.

\bibitem{YangZhouYinGao2022}
Mengxi Yang, Xuebin Zhou, Jie Yin, and Junbin Gao.
\newblock Quasi-framelets: Another improvement to spectral graph neural
  networks.
\newblock {\em arXiv:2201.04728}, 2022.

\bibitem{yang2016revisiting}
Zhilin Yang, William Cohen, and Ruslan Salakhudinov.
\newblock Revisiting semi-supervised learning with graph embeddings.
\newblock pages 40--48, 2016.

\bibitem{zhang2021magnet}
Xitong Zhang, Yixuan He, Nathan Brugnone, Michael Perlmutter, and Matthew Hirn.
\newblock Magnet: A neural network for directed graphs.
\newblock {\em arXiv preprint arXiv:2102.11391}, 2021.

\bibitem{zhang2020deep}
Ziwei Zhang, Peng Cui, and Wenwu Zhu.
\newblock Deep learning on graphs: a survey.
\newblock {\em IEEE Transactions on Knowledge and Data Engineering}, 2020.

\bibitem{ZhengZhouGaoWangLioLiMontufar2021}
Xuebin Zheng, Bingxin Zhou, Junbin Gao, Yuguang Wang, Pietro Li{\'{o}}, Ming
  Li, and Guido Mont{\'{u}}far.
\newblock How framelets enhance graph neural networks.
\newblock In Marina Meila and Tong Zhang, editors, {\em Proceedings of the 38th
  International Conference on Machine Learning, {ICML} 2021, 18-24 July 2021,
  Virtual Event}, volume 139 of {\em Proceedings of Machine Learning Research},
  pages 12761--12771. {PMLR}, 2021.

\bibitem{zhou2020graph}
Jie Zhou, Ganqu Cui, Shengding Hu, Zhengyan Zhang, Cheng Yang, Zhiyuan Liu,
  Lifeng Wang, Changcheng Li, and Maosong Sun.
\newblock Graph neural networks: a review of methods and applications.
\newblock {\em AI Open}, 1:57--81, 2020.

\end{thebibliography}

\end{document}